
\documentclass[letterpaper, 10 pt, conference, nofonttune]{ieeeconf} 

\IEEEoverridecommandlockouts                              

\overrideIEEEmargins                                      




\usepackage{amsthm}

\newtheorem{theorem}{Theorem}

\usepackage{multirow}
\usepackage{subfigure}
\usepackage{subfloat}
\usepackage{color}
\usepackage{times}
\usepackage{epsfig}
\usepackage{amsmath}
\usepackage{amssymb}
\usepackage{mathrsfs}
\usepackage[ruled,vlined,linesnumbered]{algorithm2e}
\usepackage{graphicx} 
\usepackage{microtype}
\usepackage{wrapfig}
\newcommand{\RRR}{\mathbb{R}}
\newcommand{\GGG}{\mathcal{G}}
\newcommand{\RU}{\mathcal{R}}
\newcommand{\NNN}{\mathcal{N}}

\newcommand{\EEE}{\mathcal{E}}
\newcommand{\XXX}{\mathcal{X}}
\newcommand{\ZZZ}{\mathcal{Z}}
\newcommand{\LLL}{\mathcal{L}}
\newcommand{\MMM}{\mathcal{M}}

\newcommand{\PPP}{\mathcal{P}}


\newcommand{\FF}{\mathbf{F}}
\newcommand{\PP}{\mathbf{P}}

\newcommand{\HH}{\mathbf{H}}

\newcommand{\QQ}{\mathbf{Q}}
\newcommand{\KK}{\mathbf{K}}
\newcommand{\II}{\mathbf{I}}
\newcommand{\RR}{\mathbf{R}}
\newcommand{\WW}{\mathbf{W}}
\newcommand{\EE}{\mathbf{E}}

\newcommand{\MM}{\mathbf{M}}

\newcommand{\vv}{\mathbf{v}}
\newcommand{\xx}{\mathbf{x}}
\newcommand{\yy}{\mathbf{y}}
\newcommand{\zz}{\mathbf{z}}
\newcommand{\mm}{\mathbf{m}}
\newcommand{\hh}{\mathbf{h}}
\newcommand{\st}{\mathbf{s}}

\newcommand{\ee}{\mathbf{e}}

\definecolor{byzantine}{rgb}{0.74, 0.2, 0.64}

\title{\LARGE \bf
Multi-view Sensor Fusion by Integrating Model-based Estimation and Graph Learning for Collaborative Object Localization
}

\author{
}
\begin{document}
\author{Peng Gao$^{1}$, Rui Guo$^{2}$, Hongsheng Lu$^{2}$ and Hao Zhang$^{1}$
\thanks{$^{1}$Peng Gao and Hao Zhang are with the Human-Centered Robotics Laboratory at the Colorado School
of Mines, Golden, CO 80401, USA. $\{$gaopeng, hzhang$\}$@mines.edu}%
\thanks{$^{2}$Rui Guo and Hongsheng Lu are with Toyota Motor North America, Mountain View, CA 94043. $\{$rui.guo, hongsheng.lu$\}$@toyota.com}%
}

\maketitle
\thispagestyle{empty}
\pagestyle{empty}



\begin{abstract}
    Collaborative object localization aims to collaboratively estimate locations of objects observed from multiple views or perspectives, which is a critical ability for multi-agent systems such as connected vehicles. 
    To enable collaborative localization, several model-based state estimation and learning-based localization methods have been developed. 
    Given their encouraging performance,  model-based state estimation often lacks the ability to model the complex relationships among  multiple objects, 
    while learning-based methods are typically not able to fuse the observations from an arbitrary number of views
    and cannot well model uncertainty.
    In this paper, we introduce a novel spatiotemporal graph filter approach that 
    integrates graph learning and model-based estimation to perform multi-view sensor fusion for collaborative object localization. 
    Our approach models complex object relationships using a new spatiotemporal graph representation 
    and fuses multi-view observations in a Bayesian fashion to improve location estimation under uncertainty. 
    We evaluate our approach in the applications of connected autonomous driving and multiple pedestrian localization. Experimental results show that our approach outperforms previous techniques and achieves the state-of-the-art performance on collaborative localization.
\end{abstract}

\section{Introduction}



Object localization has been widely researched over the past decades due to its importance for situational awareness,
e.g., to understand street objects in autonomous driving.
The objective of object localization is to estimate the locations or positions of the objects in the environment
based on observations from robots and/or cameras. 
Most object localization methods focus on a single view,
in which objects are localized from a single perspective using observations obtained from a single robot or camera. 
Single-view localization methods have been used in various application domains, 
including object detection \cite{chen2016monocular,qi2018frustum} and tracking  \cite{sindagi2019mvx,zhang2019robust}, 
simultaneous localization and mapping \cite{bowman2017probabilistic,sharma2018beyond}, and scene reconstruction \cite{delmerico2018comparison,vasquez2014view}.

Recently, collaborative object localization based on multiple views has attracted increasing attention due to its reliability, scalability, and resilience to failures \cite{wei2018survey}.
The multiple views can be obtained either by multiple robots \cite{wasik2020robust} (e.g., connected vehicles) 
or multiple cameras \cite{xu2016multi}.
Multi-view collaborative localization outperforms single-view localization methods  
by integrating observations from multiple different perspectives to promote  shared situational awareness of the environment.
For example, as shown in Figure \ref{fig:motivation}, by integrating the measurements of the observed objects between two connected vehicles, 
the object locations can be better estimated.
Besides locating street objects by connected vehicles \cite{guo2019collaborative,marvasti2020cooperative},
multi-view collaborative localization is also applied to various real-world applications,
such as search and rescue \cite{acevedo2020dynamic,wang2018master} and manufacture \cite{dogar2019multi,gao2020visual}.

\begin{figure}[t]
\vspace{6pt}
\centering
\includegraphics[width=0.48\textwidth]{./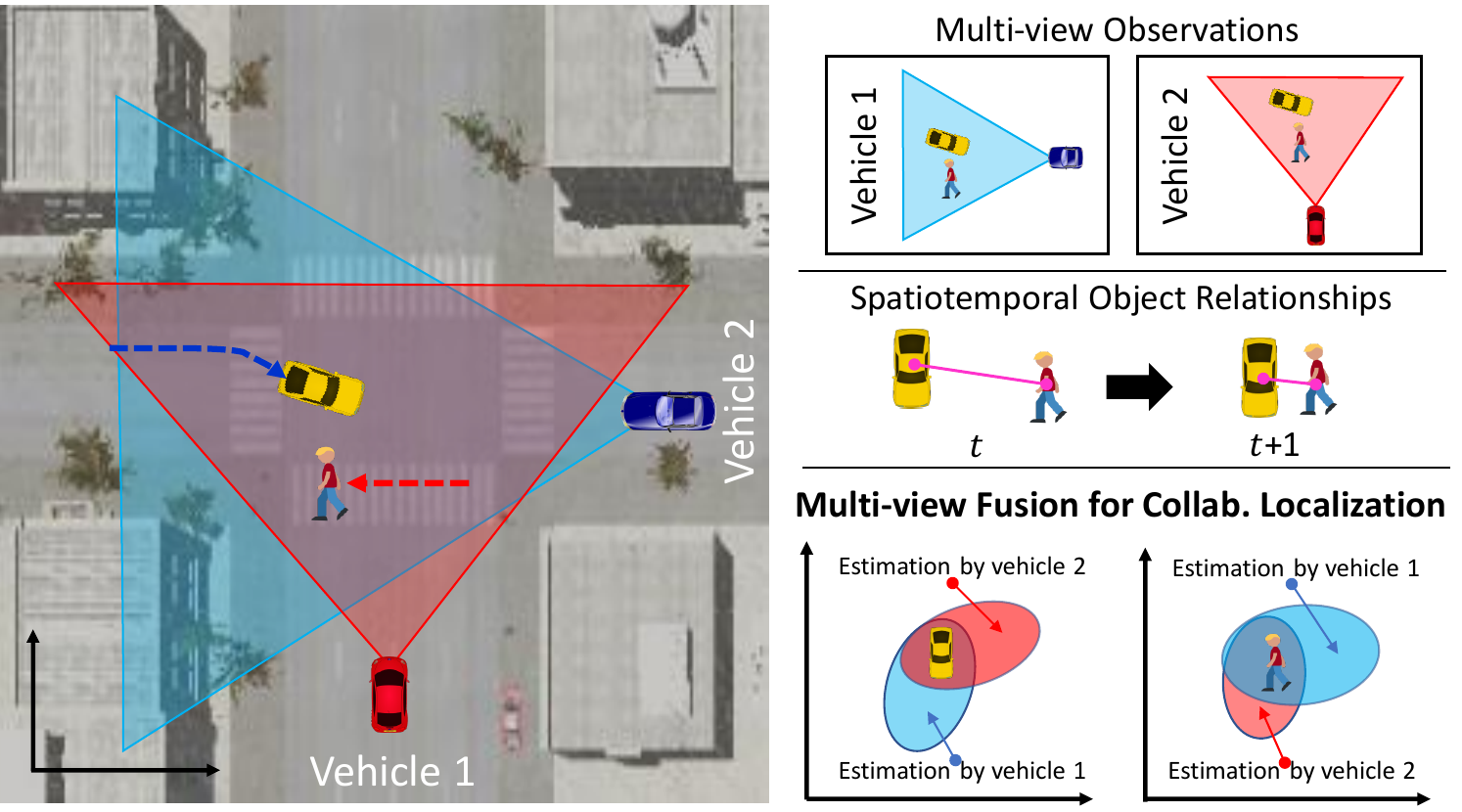}
\caption{
A motivating example of multi-view sensor fusion for  collaborative object localization in connected driving scenarios. When multiple connected vehicles localize multiple street objects, by encoding the objects' spatiotemporal cues for state estimation and integrating multi-view measurements, the final collaborative object localization results can be greatly improved.}
\vspace{-6pt}
\label{fig:motivation}
\end{figure}

Given its importance,
several methods have been developed to address collaborative localization of objects,
which can be broadly divided into two groups: model-based and learning-based methods. 
Model-based methods integrate state estimations and measurements under the Bayesian framework, e.g., using Kalman filters and extensions \cite{Weng2020_AB3DMOT,wang2018constrained}, particle filters \cite{deng2019poserbpf}, and Bayesian sequential filters \cite{stenger2006model,ullah2017hierarchical} to estimate the object locations.
Learning-based approaches typically employ neural networks to fuse multiple sensing modalities for location estimation,
e.g., based upon recurrent neural networks to model object motions \cite{altche2017lstm, zhang2019sr}, 
graph neural networks to encode the spatial relationship among objects \cite{li2019grip,alahi2016social},
and mixture architectures to integrate multiple relationships \cite{huang2019stgat, ivanovic2019trajectron}.


Both categories of the methods have their own advantages for object localization,
each category has its own shortcomings that have not been well addressed.
The shortcoming in model-based methods is caused by the inability to model the complex spatiotemporal relationship of the objects. 
For example, when estimating the location of a street object,
the historical information of the nearby objects is also important.
On the other hand, 
learning-based methods using deep networks often do not explicitly model uncertainty,
and lacks interpretability on the estimation procedure as well as flexibility 
to integrate an arbitrary number of inputs (e.g., neural networks often require a predefined and fixed number of views).

In this paper, we introduce a novel spatiotemporal graph filtering approach that integrates graph learning and model-based estimation in a principled fashion to perform multi-view sensor fusion for collaborative object localization.
For each view, we represent each observation as a graph, 
with the nodes to encode the locations of the detected objects
and the edges to encode the spatial relationship of the objects.
We also explicitly model the uncertainty in the observations of object locations.
To explicitly model the time dimension,
we represent a history of observations obtained from a view as spatiotemporal graphs.
When multiple views are available,
using their spatiotemporal graph representations,
we formulate collaborative localization as a multi-view sensor fusion problem.
Our method integrates spatiotemporal graph learning (that models the spatiotemporal relationship of the objects)
and model-based state estimation (that estimates locations in a Bayesion fashion and explicitly models uncertainty in both observations and estimations)
to address the formulated multi-view sensor fusion problem.

Our key contribution is the proposal of the novel spatiotemporal graph filter method 
that integrates spatiotemporal graph learning and model-based estimation to address collaborative object localization. 
Specifically, two novelties are proposed: 

\begin{itemize}
    \item We introduce a new representation for state estimation based on spatiotemporal graph neural networks,
which is able to not only encode complex spatiotemporal relationships of the objects
but also be readily integrated with model-based state estimation.
    \item We develop two new multi-view fusion gains that allow our method to fuse measurements from an arbitrary number of views, thus significantly improving the flexibility of our method in real-world applications when a dynamically changing number of views are available.
\end{itemize}

\section{Related Work}


\subsection{Multi-view Collaborative Object Localization}
Many methods have been designed to address multi-view collaborative object localization. One group of these methods is focused on using strict geometric constraints among multiple views to localize objects, e.g., based on strict camera parameters for stereo vision \cite{yao2018mvsnet,li2018stereo} and 3D reconstruction \cite{ji2017surfacenet,dong20174d}, ground plane assumptions for 3D object detection \cite{wu2019accurate}, and human geometric information for pedestrian tracking \cite{xu2016multi}. However, it is hard to obtain strict geometric information in multi-robot systems when the cameras mounted on robots are dynamic. Recently, the other group of the methods based on noisy geometric constraints are proposed. In these methods, the noisy geometric relationships obtained through GPS \cite{brahmbhatt2018geometry} or deep learning algorithm \cite{kendall2015posenet,yin2018geonet} are used to transform multi-view observations into the same coordination. The noisy geometric relationships are used for various applications, e.g., global mapping \cite{roth2003real}, object detection \cite{wasik2020robust} and scenes registration \cite{li2018scene}. 
Our approach allows to use noisy geometric constraints for collaborative object localization, which does not require fixed positions of cameras.

\subsection{Sensor Fusion}
Sensor fusion has been widely used to improve localization performance. We divide the methods into two groups, including model-based and learning-based methods. 
Model-based methods integrate state estimations and measurements under the Bayesian framework, e.g., using Kalman filter \cite{Weng2020_AB3DMOT}, extend Kalman filter \cite{wang2018constrained,panzieri2006multirobot}, particle filter \cite{deng2019poserbpf,qin2019surgical} and Bayesian sequential filters \cite{stenger2006model,ullah2017hierarchical} to estimate the object locations given their measured velocity, acceleration and location information. In addition, learning-based methods typically use neural networks to fuse multiple sensing modalities for location estimation, e.g., based on recurrent neural networks to model object motions \cite{altche2017lstm, zhang2019sr}, graph neural networks to encode spatial relationships among objects \cite{li2019grip,alahi2016social}, and mixture architectures to integrate multiple modalities, such as spatiotemporal relationships \cite{huang2019stgat, ivanovic2019trajectron}, and  visual-spatial relationships of objects \cite{yin2018geonet,meng2019signet,chen2017multi}.

For collaborative object localization, model-based methods are not able to model the complex spatiotemporal relationship of objects. In addition, learning-based methods are not able to model uncertainty and requires a predefined and fixed number of views as inputs. Our proposed approach that integrates both learning-based object localization and model-based state estimation in a principled fashion can well address these shortcomings in collaborative object localization.

\section{Preliminary}

Linear quadratic estimation, including Kalman filters and the extensions \cite{kleeman1992optimal},
is a widely used method for state estimation,
which estimates the states of a target based on noisy measurements.
Mathematically, linear quadratic estimation based on the Kalman filter is defined as:
\begin{equation} \label{eq:kf-1}
    \xx^t=\FF \xx^{t-1}
\end{equation}
\begin{equation}\label{eq:kf-2}
    \PP^{t}=\FF \PP^{t-1} \FF^{\top} + \QQ
\end{equation}
where $\xx^t$ denotes the state estimation of the target at time step $t$,
$\PP^t$ is the uncertainty of the state estimation,
$\FF$ is the state transition matrix that maps the state from time $t-1$ to $t$,
and $\QQ$ is the process uncertainty that  represents the uncertainty of the state estimation process.
The state $\xx^t$ and the uncertainty $\PP^t$ follow a multivariate Gaussian distribution $\NNN(\xx^t,\PP^t)$.
In addition to the state estimation,
this method can update the state to $\xx'^t$ from $\xx^t$ given a measurement $\zz^t$ at time $t$ as follows:
\begin{equation}\label{eq:kf-3}
    \KK^t=\PP^t \HH^{t \top} (\HH^t\PP^t\HH^{t \top} + \RR)^{-1}
\end{equation}
\begin{equation}\label{eq:kf-4}
    \xx'^t=\xx^t+\KK^t(\zz^t-\HH^t\xx^t)
\end{equation}
where $\RR$ denotes the uncertainty in the measurement $\zz^t$,
$\HH^t$ is the measurement transition matrix that maps the state to the measurement,
and $\KK^t$ is the Kalman gain that encodes the relative weight
of the uncertainty in both the state estimation and the measurement.
The measurement $\zz^t$ and the uncertainty of the measurement $\RR$ are also assumed to follow a multivariate Gaussian distribution $\NNN(\zz^t,\RR^t)$.
Then the uncertainty in the updated state $\xx'^t$ is computed by:
\begin{equation}\label{eq:kf-5}
    \PP'^t=(\II-\KK^t\HH^t) \PP^t
\end{equation}
where $\PP'^t$ is the uncertainty in the updated state.
It has been approved that this method converges to the optimal solution for single-view state estimation \cite{kleeman1992optimal}.

\section{Approach}

\textbf{Notation.}
We use superscript $t$ and $v$ to represent the time step and the view index, respectively.
We also use subscript $i$ to denote the index of the object in a view.
For example, $\vv^{t,v}_i$ denotes a feature vector of the $i$-th object observed in the $v$-th view at time $t$.

\subsection{Problem Formulation}
Given $n$ views (e.g., from $n$ robots and/or cameras) that are  partially overlapped,
we denote the observations recorded in all $n$ views as
$\MMM=\{\MMM^v\},v=1,2,\dots,n$,
where $\MMM^v=\{\GGG^{1,v},\GGG^{2,v},\dots,\GGG^{t,v}\}$ is a sequence of observations obtained in the $v$-th view from time $1$ to $t$.
Each observation $\GGG^{t,v}$ is represented as a graph
$\GGG^{t,v}=\{\ZZZ^{t,v},\RU^{t,v},\EEE^{t,v}\}$.
The node set $\ZZZ^{t,v}=\{\zz_{1}^{t,v},\zz_{2}^{t,v},\dots,\zz_{m}^{t,v}\}$ denotes the measurements (or observations) of all the objects' 3D locations,
where $\zz_{i}^{t,v} \in \RRR^{3}$ denotes the measurement of the $i$-th object's location observed in the $v$-th view at time $t$,
and $m$ is the number of observed objects.
The uncertainty set $\RU^{t,v}=\{\RR_{1}^{t,v},\RR_{2}^{t,v},\dots,\RR_{m}^{t,v}\}$ denotes the uncertainty in $\ZZZ^{t,v}$,
where $\RR_i^{t,v} \in \RRR^{3 \times 3}$ is defined as the covariance of $\zz_i^{t,v}$.
Then, $\zz_{i}^{t,v}$ and $\RR_i^{t,v}$ can be assumed to follow a multivariate Gaussian distribution $\NNN(\zz_i^{t,v},\RR_i^{t,v})$ with $\zz_i^{t,v}$ as the mean and $\RR_i^{t,v}$ as the covariance.
The edge set $\EEE^{t,v}=\{e^{t,v}_{i,j}\}$ denotes the spatial relationships between a pair of objects, where $\ee^{t,v}_{i,j}=1$, if $\zz_i^{t,v}$ and $\zz_j^{t,v}$ are connected.


We represent the locations of the objects as the states $\XXX^t=\{\XXX^{t,v}\}, v=1,,2,\dots,n$,
where $\XXX^{t,v}$ denotes the states that are estimated from the $v$-th view  at time $t$.
$\XXX^{t,v}=\{\xx_i^{t,v}\},i=1,2,\dots m$, 
contains the state estimations of all the $m$ objects observed in the $v$-th view, 
where $\xx_i^{t,v}$ is the estimated states of the $i$-th object in the $v$-th view at time $t$.
In addition, 
we use $\PPP^t=\{\PPP^{t,v}\},v=1,2,\dots,n$ to denote the uncertainties in $\XXX^t$.
$\PPP^{t,v}=\{\PP_i^{t,v}\},i=1,2,\dots,m$, includes the uncertainties in $\XXX^{t,v}$, 
and $\PP_i^{t,v} \in \RRR^{3\times 3}$ denotes the state uncertainty in $\xx_i^{t,v}$. 
Then, $\xx_i^{t,v}$ and $\PP_i^{t,v}$ can be assumed to follow the multivariate Gaussian distribution $\NNN(\xx_i^{t,v},\PP_i^{t,v})$.



The problem that is addressed in this paper focuses on multi-view state estimation for collaborative object localization
in order to estimate the states (i.e., 3D locations) $\XXX^t$ of multiple objects 
by integrating observations from multiple views $\MMM=\{\MMM^v\}$.  
In order to address the challenge that model-based state estimation methods often cannot well encode 
the spatiotemporal relationship among objects,
our approach constructs a representation based on deep graph learning to model this complex spatiotemporal relationship (Section \ref{sec:learning}).
In order to integrate an arbitrary number of views and improve robustness to noise,
which has not been well addressed by learning-based estimation,
we integrate learning-based and model-based state estimation in a principled way 
to perform collaborative object localization (Section \ref{sec:modeling}).

\color{black}

\subsection{Encoding Spatiotemporal Relationships of the Objects} \label{sec:learning}
Our first novelty is the design of a learning-based method to model the complex spatiotemporal relationship among objects for the state estimation of objects, which is defined as follows:
\begin{equation}\label{eq:formulation_state_est}
    \xx_i^{t,v}=\Phi\left(\GGG^{1,v},\GGG^{2,v},\dots,\GGG^{t-1,v}\right)
\end{equation}
where $\Phi$ denotes the spatiotemporal graph learning network that estimates the object's state $\xx_i^{t,v}$ given its spatiotemporal information encoded in the spatiotemporal graphs $\{\GGG^{1,v},\GGG^{2,v},\dots,\GGG^{t,v}\} \in \MMM^v$. 
The graph learning network $\Phi$ consists of three components, including a LSTM encoder, a graph attention neural network and a LSTM decoder.

First, we design a LSTM-based module to encode the temporal motion of each object as follows:
\begin{equation}
 \Delta \zz^{t-1,v}_{i}=\zz^{t-1,v}_{i}-\zz^{t-2,v}_{i}
\end{equation}
\begin{equation}\label{eq:lstm-encode}
    \mm^{t-1,v}_{i}=\phi\left(\mm^{t-2,v}_{i}, \Delta \zz^{t-1,v}_{i},\WW^e\right)
\end{equation}
where $\Delta \zz^{t-1,v}_{i}$ denotes the relative location (motion) of the $i$-th object in the $v$-th view from time $t-2$ to $t-1$, $\phi$ denotes a one-layer LSTM encoder network with the trainable parameter matrix $\WW^e$ that is shared among all objects, and $\mm^{t-1,v}_{i}$ denotes the temporal motion embedding of the $i$-th object, which captures the motion of the object.

Second, we introduce a graph attention neural network to capture the spatial impacts of each object from its surrounding objects as follows:
\begin{align}\label{eq:Gat}
    &\alpha_{i,j}^{t-1,v}=  \\
    &\frac{\exp \left(\text{ReLu} (\WW^a \mm^{t-1,v}_{i}||\WW^a \mm^{t-1,v}_{j})\right)}{\sum_{e^{t-1,v}_{i,j}}\exp \left(\text{ReLu} (\WW^a \mm^{t-1,v}_{i}||\WW^a \mm^{t-1,v}_{j})\right)} \nonumber
\end{align}
where $||$ denotes the concatenation operation, $\WW^a$ is a trainable parameter matrix that is shared by all the objects, $e^{t-1,v}_{i,j}$ denotes the connection between the $i$-th object and its $j$-th neighborhood object, and $\alpha_{i,j}^{t-1,v}$ denotes the impact of the $j$-th neighborhood object to the $i$-th object in the $v$-th view at time $t-1$. By using the SoftMax function with ReLu nonlinear function, we normalize the impacts of the $i$-th object from its neighbors. Given the normalized spatial impacts, the spatial embedding of the impacts of each object from its surrounding objects are defined as follows:
\begin{equation}\label{eq:Gat_update}
    \st^{t-1,v}_{i}=\text{ReLu}\left(\sum_{e^{t-1,v}_{i,j}}\alpha_{i,j}^{t-1,v}\WW^a \mm^{t-1,v}_{j}\right)
\end{equation}
where $\st^{t-1,v}_{i}$ denotes the spatial embedding of the $i$-th objects, which captures the impacts of the $i$-th object from its neighbors.

Third, the object location at time $t$ is estimated through a LSTM decoder given its spatiotemporal embedding, which is defined as follows:
\begin{equation} \label{eq:spatiotemporal}
    \hh^{t,v}_{i}=\mm^{t,v}_{i}||\st^{t,v}_{i}
\end{equation}
where $\hh^{t,v}_{i}$ denotes the spatiotemporal embedding of the $i$-th object, which is the concatenation of its temporal embedding $\mm^{t,v}_{i}$ and spatial embedding $\st^{t,v}_{i}$. The object location is estimated as follows:
\begin{equation} \label{eq:lstm-decoder}
    \xx^{t,v}_{i}=\psi\left(\hh^{t-1,v}_{i}, \Delta \zz^{t-1,v}_{i}, \WW^d\right)
\end{equation}
where $\psi$ denotes a one-layer LSTM decoder network with a trainable parameter matrix $\WW^d$. The final state $\xx^{t,v}_{i}$ is estimated given the relative location $\Delta \zz^{t-1,v}_{i}$ and the spatiotemporal embedding $\hh^{t-1,v}_{i}$. The uncertainty of the state estimation $\xx^{t,v}_{i}$ is defined as follows:
\begin{equation}\label{eq:formulation_state_un}
    \PP_i^{t,v}= \PP_i^{t-1,v} + \QQ_i^{t,v}
\end{equation}
where $\QQ_i^{t,v}$ denotes the process uncertainty caused by the model bias in the sptiotemporal graph learning network $\Phi$. 
Square error loss is used to train the network $\Phi$: $\LLL=\sum_{v=1}^{n}\sum_{i=1}^{m}||\xx_i^{t,v}-\yy_i^{t,v}||_2$, 
where $\yy_j^{t,v}$ denotes the ground truth location of the $i$-th object in the $v$-th view at time $t$.

\subsection{Integrating Arbitrary Multi-view Measurements} \label{sec:modeling}
As the second novelty of this paper,
we introduce a multi-view state estimation method that 
integrates learning-based and model-based state estimation to fuse measurements from an arbitrary number of views. Given the learned state estimations obtained in Eq. (\ref{eq:lstm-decoder}) and the measurements from an arbitrary number of views,
our approach estimates the states of objects as follows:


\begin{equation}\label{eq:formulation_state_update}
    \hat{\xx}_i^{t,v}=g(\xx_i^{t,v},\zz_i^{t,1},\zz_i^{t,2},\dots,\zz_i^{t,v})
\end{equation}
where $g$ denotes the state update function, and $\hat{\xx}_i^{t,v}$ denotes the updated state estimation obtained by integrating the learned state estimation $\xx_i^{t,v}$ and the multi-view measurements $\zz_i^{t,v},v=1,2,\dots,n$. 
Specifically, the state update function $g$ includes two new multi-view gains, which are defined as follows:
\begin{equation}\label{eq:estimation gain}
    \EE_i^{t,v}=\left((\PP_i^{t,v})^{-1}+\sum_{v=1}^{n} (\RR_i^{t,v})^{-1})\right)^{-1}(\PP_i^{t,v})^{-1}
\end{equation}
\begin{equation}\label{eq:measurement gain}
    \MM_i^{t,v}=\left((\PP_i^{t,v})^{-1}+\sum_{v=1}^{n} (\RR_i^{t,v})^{-1})\right)^{-1}(\RR_i^{t,v})^{-1}
\end{equation}
where $\EE_i^{t,v} \in \RRR^{3\times 3}$ denotes the state estimation gain for the learned state estimation $\xx_i^{t,v}$, which describes the relative weight of the learned state estimation uncertainty $\PP_i^{t,v}$ in the total uncertainty, and $\MM_i^{t,v} \in \RRR^{3\times 3}$ denotes the measurement gain for the measurement $\zz_i^{t,v}$, which describes the relative weight of the measurement uncertainty $\RR_i^{t,v}$ in the total uncertainty.
Intuitively, if the measurement uncertainty $\RR_i^{t,v}$ becomes greater, then the measurement gain $\MM_i^{t,v}$ becomes smaller, which means that we trust the learned state estimation more than the measurement. In addition, $\EE_i^{t,v}$ and $\MM_i^{t,v}$ follow the constraint $\sum_{v=1}^{n} \MM_i^{t,v}+\EE_j^{t,v}=\mathbf{I}$, where $\mathbf{I} \in \RRR^{3\times 3}$ denotes an identical matrix.  
Given $\MM_i^{t,v}$ and $\EE_j^{t,v}$, the state update function $g$ is defined as follows:
\begin{equation} \label{eq:update_state}
    \hat{\xx}_i^{t,v}= \EE_i^{t,v}\xx_i^{t,v} +\sum_{v=1}^{n}\MM_i^{t,v}\zz_i^{t,v}
\end{equation}
where the updated state estimation $\hat{\xx}_i^{t,v}$ is computed through the sum of the learned state estimation $\xx_i^{t,v}$ and multi-view measurements $\zz_i^{t,v},v=1,2,\dots,n$, which are weighted by the multi-view gains. 

The uncertainty of the updated state estimation is defined as follows:
\begin{equation} \label{eq:update_uncertainty}
    \hat{\PP}_i^{t,v}=\left(\left(\PP_i^{t,v}\right)^{-1}+ \sum_{v=1}^n (\RR_i^{t,v})^{-1}\right)^{-1}
\end{equation}
where $\hat{\PP}_i^{t,v}$ denotes the uncertainty of the updated state estimation $\hat{\xx}_i^{t,v}$, which is obtained by integrating both the learned state estimation uncertainty $\PP_i^{t,v}$ and measurement uncertainties $\RR_i^{t,v},v=1,2,\dots,n$. 

Our method of multi-view sensor fusion for collaborative object location is presented in Algorithm \ref{alg:OptAlgorithm}. 
It is worth noting that the updated state estimation and state uncertainty can be used as a part of the historical sequence to continuously improve state estimation in future iterations.
In addition, we prove that our multi-view fusion method is equivalent to linear quadratic estimation when only one view exists.

\begin{algorithm}[t]
	\SetAlgoLined
	\SetKwData{Left}{left}\SetKwData{This}{this}\SetKwData{Up}{up}
	\SetKwFunction{Union}{Union}\SetKwFunction{FindCompress}{FindCompress}
	\SetKwInOut{Input}{Input}\SetKwInOut{Output}{Output}
	\SetNlSty{textrm}{}{:}
	\SetKwComment{tcc}{//}{} 
	
	\small
	
	\Input{
	$\MMM=\{\MMM^v\}$ (multi-view measurements from time $1$ to $t$).
	}

	\Output{
		$\XXX^{t,v}$ (state estimations in the $v$-th view at time $t$)
	}
	\BlankLine

    \For {$i=1:m$}
	    {
	
	        Estimate the state $\xx_i^{t,v}$ by Eqs. (\ref{eq:lstm-encode}-\ref{eq:lstm-decoder});
	
	        Compute the state estimation uncertainty $\PP_i^{t,v}=\PP_i^{t-1,v} + \QQ_i^{t-1,v}$
	
	        Compute the state estimation gain $\EE_i^{t,v}$ by Eq. (\ref{eq:estimation gain});
		
	        Compute the measurement gain $\MM_i^{t,v}$ by Eq. (\ref{eq:measurement gain});
	
	        Compute the updated state estimation $\hat{\xx}_i^{t,v}$ by Eq. (\ref{eq:update_state});
		
	        Compute the updated state uncertainty $\hat{\PP}_i^{t,v}$ by Eq. (\ref{eq:update_uncertainty});
	
	        Update $\zz_i^{t,v}=\xx_i^{t,v}$ and $\RR_i^{t,v}=\PP_i^{t,v}$
	  }
	\Return $\XXX^{t,v}$

	\caption{The proposed spatiotemporal graph filter method for collaborative object localization.}\label{alg:OptAlgorithm}

\end{algorithm}

\begin{theorem}\label{theorem:1}
Assuming $\PP$ and $\RR$ are invertible, when $n=1$, $\MM$ defined in Eq. (\ref{eq:measurement gain}) is equivalent to $\KK$ defined in Eq. (\ref{eq:kf-3}).
\end{theorem}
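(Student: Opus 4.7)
The plan is to show that when only a single view is present ($n=1$), the multi-view measurement gain $\MM$ collapses to the classical Kalman gain $\KK$ under the (implicit but natural) assumption that the measurement transition matrix is the identity, which is consistent with the paper's setup since both the state $\xx_i^{t,v}$ and the measurement $\zz_i^{t,v}$ live in $\RRR^3$ and represent the same physical quantity (3D location). Under this setting, Eq. (\ref{eq:kf-3}) with $\HH^t=\II$ simplifies to $\KK=\PP(\PP+\RR)^{-1}$, while Eq. (\ref{eq:measurement gain}) with $n=1$ becomes $\MM=(\PP^{-1}+\RR^{-1})^{-1}\RR^{-1}$. The task reduces to proving the algebraic identity $(\PP^{-1}+\RR^{-1})^{-1}\RR^{-1}=\PP(\PP+\RR)^{-1}$.

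First I would write out both sides explicitly after the substitution $n=1$, dropping the object index $i$, the time superscript $t$, and the view superscript $v$ for clarity, and state the identity that needs to be established. Second, I would factor $\PP^{-1}+\RR^{-1}$ as
\begin{equation*}
\PP^{-1}+\RR^{-1}=\RR^{-1}(\PP+\RR)\PP^{-1},
\end{equation*}
which is verified by distributing the middle factor. Since $\PP$ and $\RR$ are assumed invertible (and thus so is $\PP+\RR$ whenever the sum is invertible, which follows from both being symmetric positive definite covariance matrices), I can invert both sides to obtain $(\PP^{-1}+\RR^{-1})^{-1}=\PP(\PP+\RR)^{-1}\RR$. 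Third, multiplying by $\RR^{-1}$ on the right yields
\begin{equation*}
(\PP^{-1}+\RR^{-1})^{-1}\RR^{-1}=\PP(\PP+\RR)^{-1}\RR\RR^{-1}=\PP(\PP+\RR)^{-1},
\end{equation*}
which is exactly $\KK$ from Eq. (\ref{eq:kf-3}) with $\HH^t=\II$. This concludes the argument.

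The main obstacle, and the only subtlety worth emphasizing explicitly in the write-up, is the identification of $\HH^t$ with the identity. The statement of Theorem \ref{theorem:1} does not mention $\HH$, so I would justify this by appealing to the problem formulation in Section \ref{sec:modeling}, where measurements $\zz_i^{t,v}\in\RRR^3$ and states $\xx_i^{t,v}\in\RRR^3$ are both 3D object locations, so the measurement model is $\zz=\xx+\text{noise}$ and $\HH^t=\II$. Once this identification is made, the remaining calculation is a one-line application of the Woodbury-style identity above, and no further technical machinery is required.
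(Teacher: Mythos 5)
Your proposal is correct and follows essentially the same route as the paper's own proof: both reduce to the identity $(\PP^{-1}+\RR^{-1})^{-1}\RR^{-1}=\PP(\PP+\RR)^{-1}\RR\,\RR^{-1}=\PP(\PP+\RR)^{-1}$ and both invoke $\HH=\II$ to match this with the Kalman gain. Your version merely makes explicit the intermediate factorization $\PP^{-1}+\RR^{-1}=\RR^{-1}(\PP+\RR)\PP^{-1}$ that the paper leaves implicit, which is a welcome clarification but not a different argument.
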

\begin{proof}
When $n=1$, then Eq. (\ref{eq:measurement gain}) is equal to the following:
\begin{equation} \label{eq:proof1}
    \MM =\left(\PP^{-1}+\RR^{-1} \right)^{-1}\RR^{-1}
\end{equation}
then, we can convert Eq. (\ref{eq:proof1}) to the following:
\begin{equation} \label{eq:proof2}
     \MM =\PP(\PP +\RR)^{-1}\RR \RR^{-1} =\PP(\PP +\RR)^{-1}= \KK
\end{equation}


Since the measurement transition matrix $\HH \in \RR^{3 \times 3}$ defined in Eq. (\ref{eq:kf-3}) is an identical matrix in our case, when $n=1$, the state estimation gain $\MM$ defined in Eq. (\ref{eq:measurement gain}) is equivalent to the Kalman gain $\KK$ defined in Eq. (\ref{eq:kf-3}).
\end{proof}


\begin{theorem}\label{theorem:2}
Assuming $\PP$ and $\RR$ are invertible, when $n=1$, $\hat{\PP}$ defined in Eq. (\ref{eq:update_uncertainty}) is equivalent to $\PP'$ defined in Eq. (\ref{eq:kf-5}).
\end{theorem}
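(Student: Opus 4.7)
The plan is to reduce both expressions to a common closed form and then match them. First, I would set $n=1$ in Eq. (\ref{eq:update_uncertainty}) to obtain the compact expression $\hat{\PP} = (\PP^{-1} + \RR^{-1})^{-1}$, which is already in the standard information-form of a fused covariance.

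Next, I would attack Eq. (\ref{eq:kf-5}). Since $\HH = \II$ (the same setup used in the proof of Theorem \ref{theorem:1}), Eq. (\ref{eq:kf-5}) simplifies to $\PP' = (\II - \KK)\PP$. Using the form of $\KK$ that was already established in Eq. (\ref{eq:proof2}), namely $\KK = \PP(\PP + \RR)^{-1}$, I would expand this to $\PP' = \PP - \PP(\PP + \RR)^{-1}\PP$, and then factor $\PP$ on the left to rewrite the bracketed term as $(\PP + \RR)^{-1}[(\PP + \RR) - \PP]$, which collapses to $\PP(\PP + \RR)^{-1}\RR$. Thus $\PP' = \PP(\PP + \RR)^{-1}\RR$.

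The remaining step is to show $\PP(\PP + \RR)^{-1}\RR = (\PP^{-1} + \RR^{-1})^{-1}$. I would do this directly by invertibility: starting from the right-hand side, write $(\PP^{-1} + \RR^{-1})^{-1} = (\RR^{-1}(\RR + \PP)\PP^{-1})^{-1} = \PP(\RR + \PP)^{-1}\RR$, which is exactly the expression obtained for $\PP'$. This is simply the standard matrix identity (a Woodbury-type manipulation) relating the covariance- and information-form updates of the Kalman filter, and the invertibility assumption on $\PP$ and $\RR$ guarantees that each inverse appearing in the derivation is well-defined.

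I do not anticipate a real obstacle here beyond the algebraic bookkeeping. The only subtle point is being careful with non-commutativity of matrix products: one should not cancel $\RR \RR^{-1}$ on the wrong side or move factors across $(\PP + \RR)^{-1}$. Keeping the factorizations in the order $\PP(\PP + \RR)^{-1}\RR$ throughout avoids any such slip, and the conclusion $\hat{\PP} = \PP'$ then follows immediately.
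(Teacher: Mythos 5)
Your proposal is correct and follows essentially the same route as the paper: both reduce $\hat{\PP}$ and $(\II-\KK)\PP$ to the common closed form $\PP(\PP+\RR)^{-1}\RR$ (the paper writes the equivalent factorization $\RR(\PP+\RR)^{-1}\PP$) using the standard information-form/covariance-form identity, under the same assumption $\HH=\II$. Your version simply spells out the intermediate algebra that the paper's one-line chain of equalities leaves implicit.
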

\begin{proof}
When $n=1$, Eq. (\ref{eq:update_uncertainty}) is equal to the following:
\begin{equation} \label{eq:proof3}
    \hat{\PP}=(\PP^{-1}+ \RR^{-1})^{-1}
\end{equation}
then, we can convert Eq. (\ref{eq:proof3}) to the following:
\begin{equation} \label{eq:proof4}
    \hat{\PP}=\RR(\PP+ \RR)^{-1}\PP = (\II-\KK)\PP =\PP'
\end{equation}
Thus, when $n=1$, $\hat{\PP}$ defined in Eq. (\ref{eq:update_uncertainty}) is equivalent to $\PP'$ defined in Eq. (\ref{eq:kf-5}). It is easy to prove that the updated state $\hat{\xx}$ in Eq. (\ref{eq:update_state}) is equivalent to the updated state $\xx'$ in Eq. (\ref{eq:kf-4}).
\end{proof}

\begin{figure}[t]
\centering

\subfigure[CAD]{
\centering
\includegraphics[height=2.2cm]{./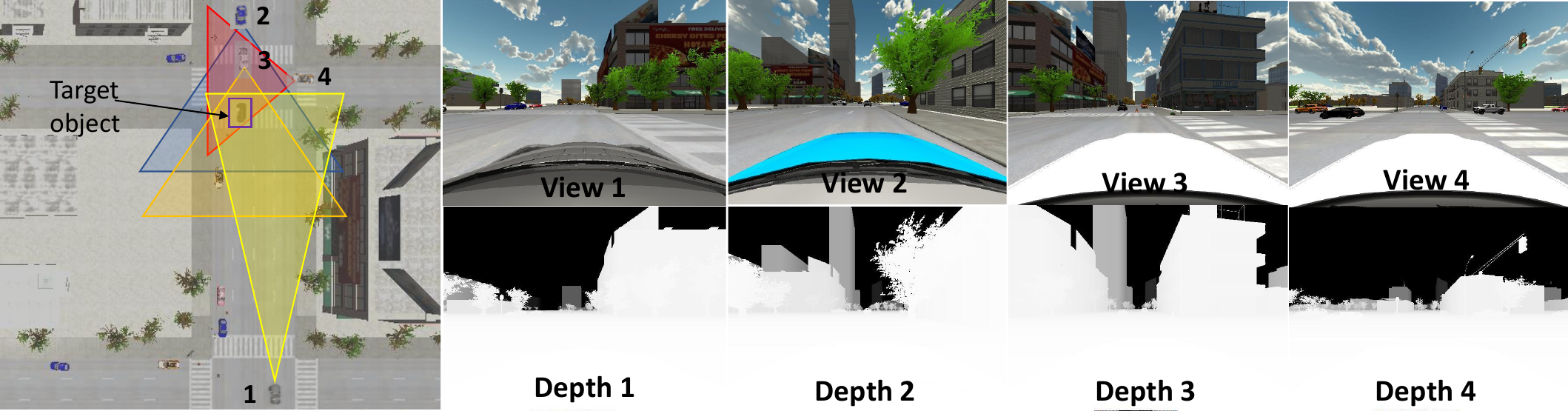}\label{fig:cad}
}\\
\subfigure[MPL]{
\centering
\includegraphics[height=2.2cm]{./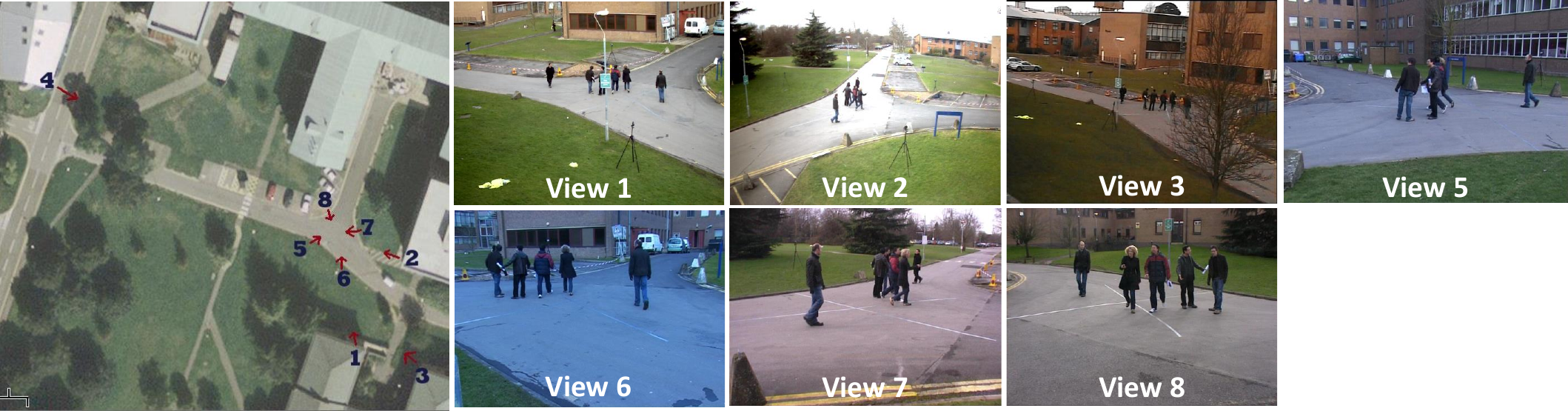}\label{fig:mpl}
}
\caption{Examples of CAD and MPL. We aim to estimate the location of the objects in CAD and MPL. In CAD, we have four observations acquired from different connected autonomous vehicles to collaboratively localize objects. In MPL, we have seven observations acquired from different cameras (view 1 to view 3 and view 5 to view 8) for collaborative object localization. The ground truth locations of objects are provided in the overhead view.
}\label{fig:dataset}
\vspace{-6pt}
\end{figure}

\section{Experiments}

\subsection{Experimental Setup}
We utilize both high-fidelity robotic simulations and real-world applications
to evaluate our method for collaborative localization in multi-view systems
in two scenarios, including
simulated connected autonomous driving (CAD) provided by Toyota as shown in Figure \ref{fig:cad},
and real-world multi-pedestrian localization (MPL) generated from PET 2009 \cite{ferryman2009pets2009} as shown in Figure \ref{fig:mpl}. 
\begin{itemize}
    \item CAD includes $3500$ data instances. Each data instance consists of $3 \times 4 = 12$ trajectories, which are generated through $3$ objects observed by $4$ connected vehicles. Each trajectory consists of at least $8$ measured locations recorded by a RGB-D camera at $10$Hz. The ground truth locations are provided by the connected vehicle simulator. We use $1500$, $500$ and $1500$ data instances for training, validation and testing, separately. 
    \item MPL includes $4500$ data instances. Each data instance consists of $4 \times 7=28$ trajectories, which are generated by $4$ pedestrians observed by $7$ cameras mounted at different positions. Each trajectory consists of at least $20$ locations recorded at $7$Hz. The ground truth 3D locations are obtained following the recent work \cite{xu2016multi}. We use $1500$, $1000$ and $2000$ data instances for training, validation and testing, separately. We assume that the same group of objects are measured by all the views and the correspondences of the objects among views are identified following the method \cite{gaoregularized}.
\end{itemize}

We assume that the same group of objects are measured by all the views and the correspondences of the objects among views are identified following the method \cite{gaoregularized}.

We construct each observation acquired by each view as a graph with node attributes generated by 3D locations. The edges are generated by fully connection given the 3D locations.
The LSTM-based encoder $\phi$ and decoder $\psi$ only contains one layer with $\WW^e$ having the dimension of $3 \times 32$ and $\WW^d$ having the dimension of $32 \times 3$. The dimension of the hidden state $\mm$ is set to $32$. The graph attention network consists of two layers, with $\WW^a$ set to $16 \times 16$ in the first layer and $16 \times 32$ in the second layer. Initially, the state $\xx$ is set to a all zero matrix, the state estimation uncertainty $\PP$ is set to a diagonal matrix with the diagonal values set to $10000$, the measurement uncertainty $\RR$ is  set to $1000$, and the process uncertainty $\QQ$ is set to $2000$. We use alternating direction method of multipliers (ADMM) as the optimization method in all experiments.


We implement our full approach using learning-based state estimation and the multi-view gains for multi-view measurements integration. We also implement a baseline method Ours-L, which only uses the learning-based state estimation for object localization given historical spatiotemporal observations. In addition, we compare with three previous methods for collaborative localization of objects, including:


\begin{itemize}
    \item Average of measurements for 3D reconstruction (\textbf{AOM}) \cite{ji2017surfacenet}, which averages the locations of the same objects in different views.
    \item Multi-modality fusion for multi-object tracking (\textbf{MMT}) \cite{Weng2020_AB3DMOT}, which uses model-based Kalman filter with a fixed velocity of objects as an assumption for state estimations.
    \item Spatiotemporal state estimation for pedestrian trajectory prediction (\textbf{STTP}) \cite{huang2019stgat}, which uses learning-based neural network to localize objects given a single-view spatiotemporal measurements.
\end{itemize}

Following the widely used experimental setup \cite{huang2019stgat,godard2017unsupervised}, we use displacement error (\textbf{DE}) to evaluate our approach, which is defined as the distance between the estimated location and the ground truth location to evaluate the localization accuracy. In addition, we also use relative displacement error (\textbf{REL-DE}) to evaluate our approach, which is defined as the ratio of the displacement error over the ground truth location, in order to evaluate the localization accuracy relative to the measurement distance.

\begin{table}
\centering
\tabcolsep=0.375cm
\vspace{6pt}
\caption{Quantitative results of our approach and comparisons with previous and baseline methods in CAD and MPL.
}
\label{tab:QuanResults}
\begin{tabular}{|l|c|c|c|c|}
\hline
Method & \multicolumn{2}{ c| }{CAD} & \multicolumn{2}{ c| }{MPL}  \\
\cline{2-5}
		 & DE & Rel-DE & DE  & Rel-DE\\
		\hline\hline
		AOM \cite{ji2017surfacenet}   &2.5449& 0.3067  &3.3672 &0.1623  \\
		\hline
		MMT \cite{Weng2020_AB3DMOT} &2.1210   & 0.2508    &2.8234 &0.1258   \\
		\hline
		STTP \cite{huang2019stgat} &1.3109   & 0.1512  &2.8652 &0.1357    \\
		\hline\hline
		{Ours-L}  &1.1562  &0.1421  &2.0350 &0.0976    \\
		\hline
		\textbf{Ours}  &\textbf{1.1387} & \textbf{0.1377} &\textbf{1.5656} & \textbf{0.0736}\\
		\hline	
\end{tabular}
\vspace{-10pts}
\end{table}

\begin{figure}[ht]
\centering
\subfigure[MMT]{
\centering
\includegraphics[width=0.155\textwidth]{./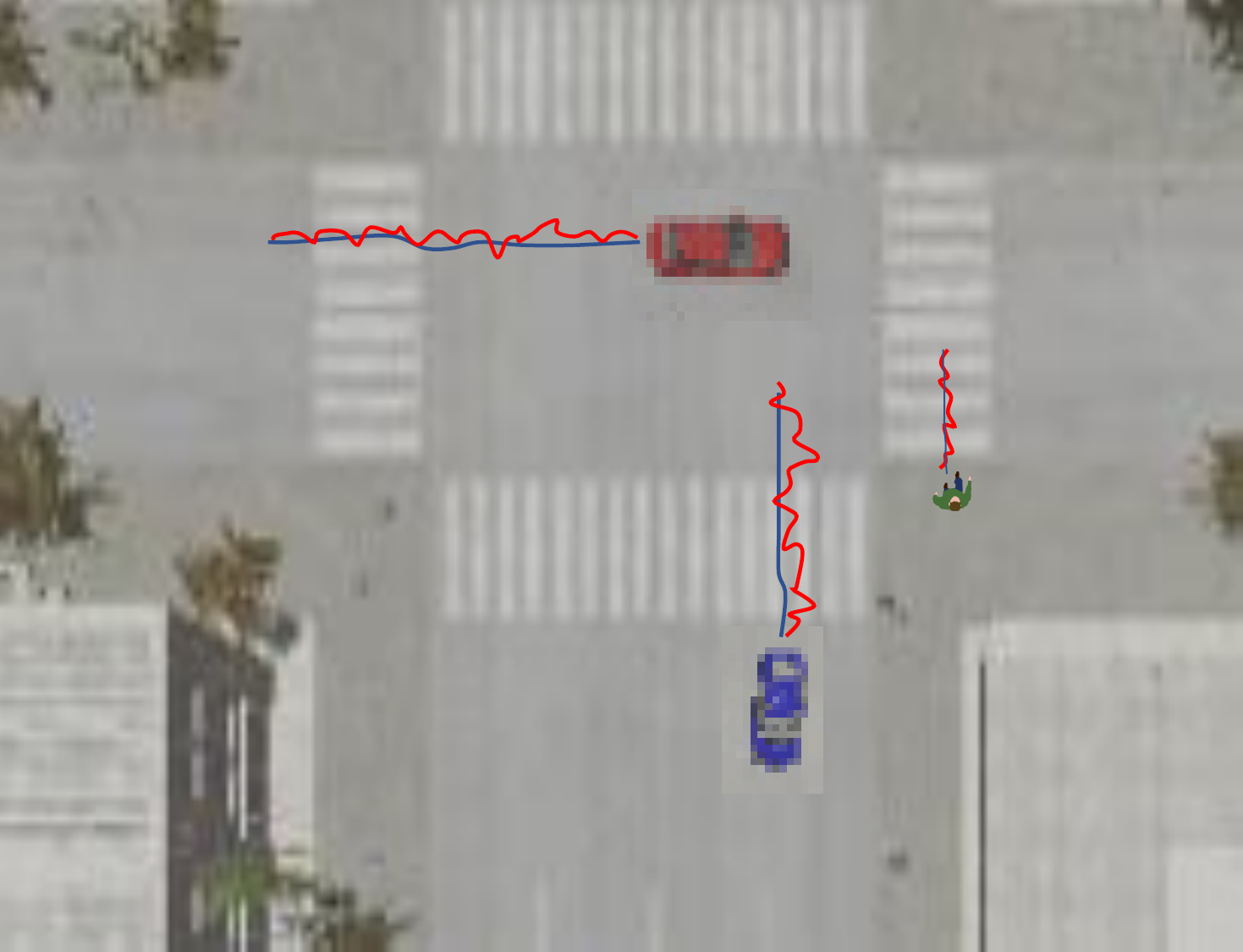}\label{fig:cad-mmt}
}
\hspace{-9pt}
\subfigure[STTP]{
\centering
\includegraphics[width=0.155\textwidth]{./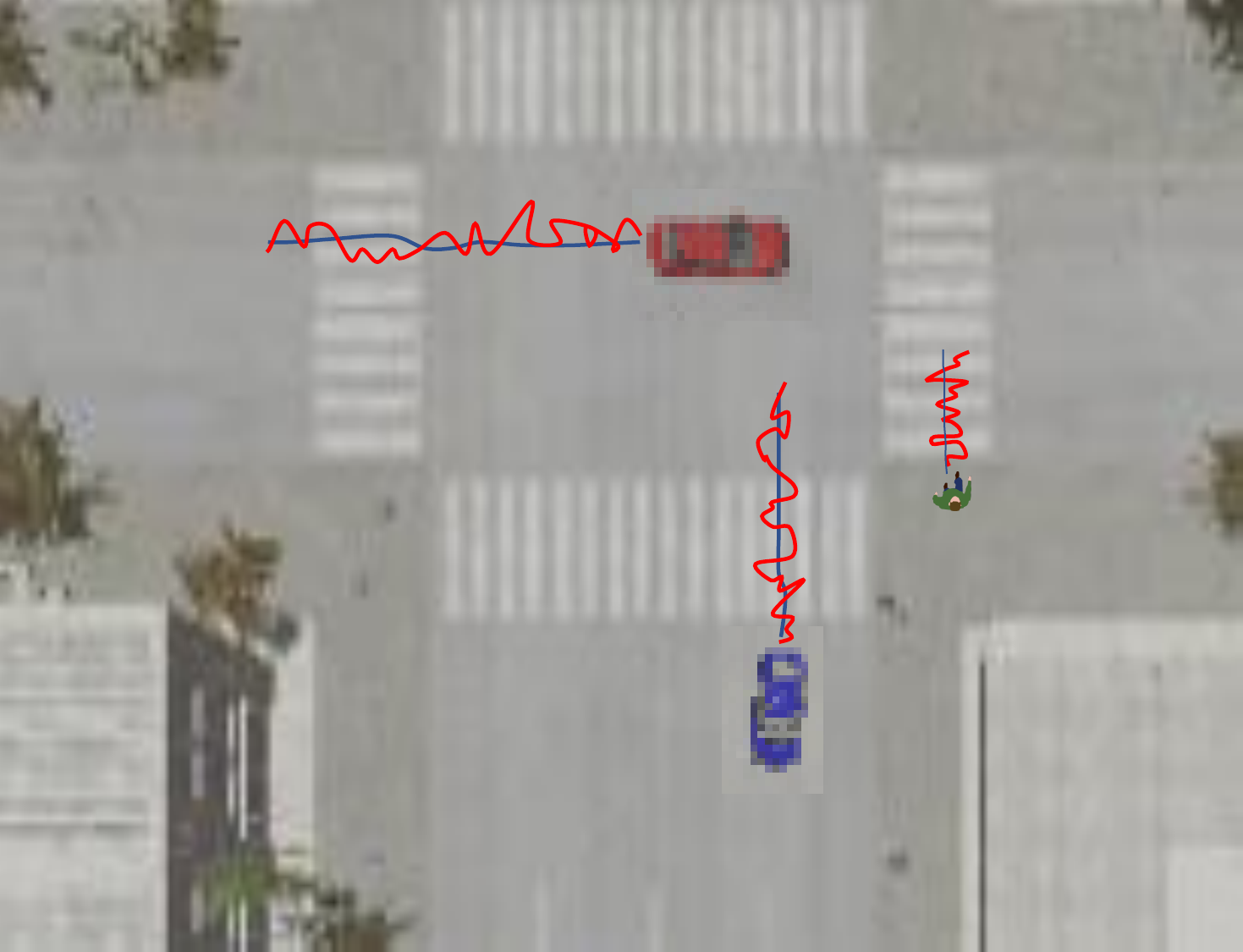}\label{fig:cad-sttp}
}
\hspace{-9pt}
\subfigure[{Our Approach}]{
\centering
\includegraphics[width=0.155\textwidth]{./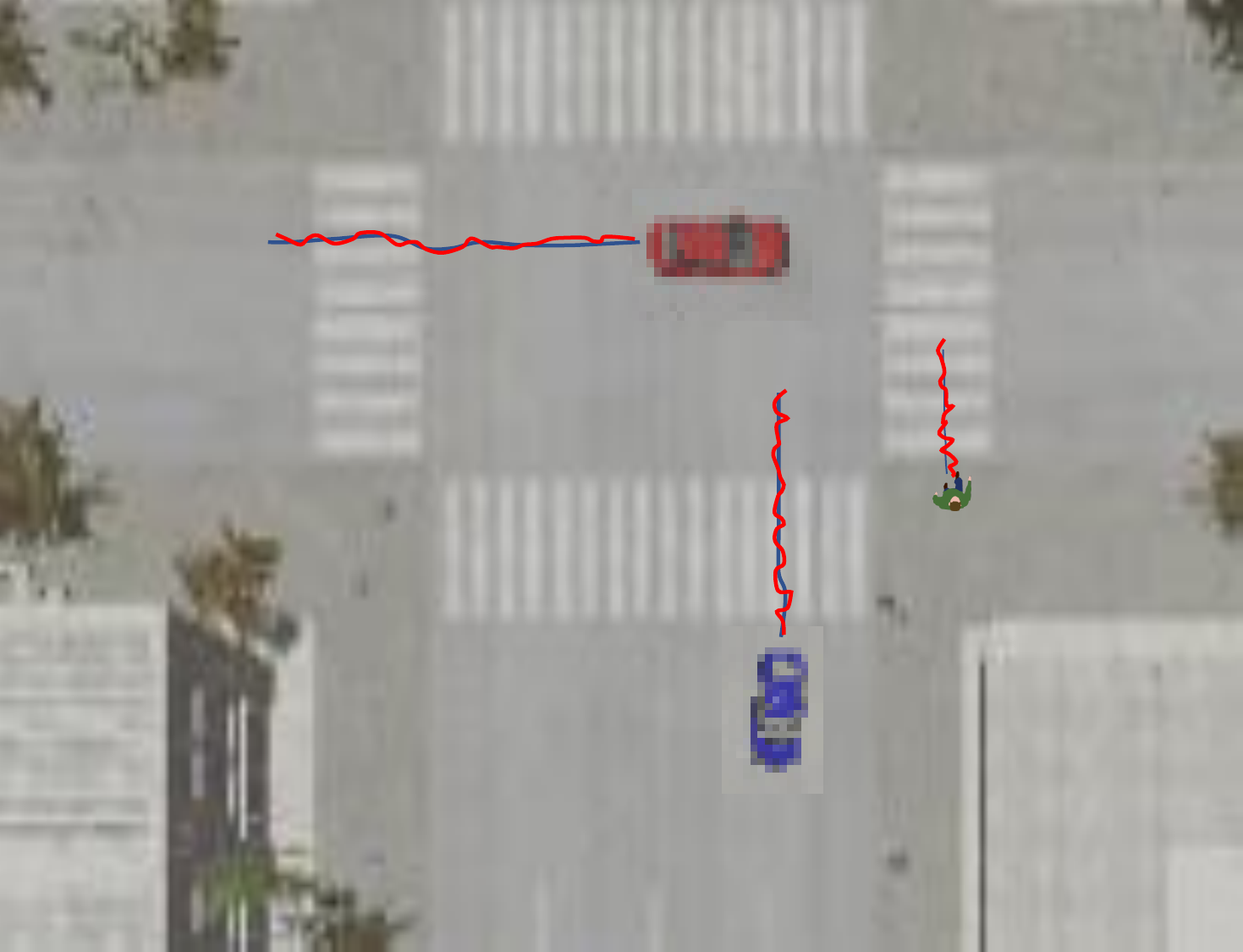}\label{fig:cad-ours}
}
\caption{Qualitative experimental results of our approach over CAD in overhead view and comparisons with the MMT and STTP methods. Blue lines denote ground truth trajectories and red lines denote the estimations. 
}
\vspace{-6pt}
\label{fig:QualResults-cad}
\end{figure}

\subsection{Connected Autonomous Driving Simulation}
Our approach is first evaluated in the CAD scenario. The object instances used in the simulation include dynamic pedestrians and vehicles observed by an arbitrary number of connected vehicles ranging from $1$ to $4$. Objects have complex spatiotemporal relationship among each other, e.g., vehicles stop at the intersection when pedestrians walk across the street. The measurements of objects include large noise.

The qualitative results in CAD are shown in Figure \ref{fig:QualResults-cad}.
It demonstrates that our approach works well on the localization of objects observed by multiple vehicles.
Comparisons with MMT and STTP are also presented in Figure \ref{fig:QualResults-cad}.
We observe that the linear estimation MMT does not work well due to the non-linear movements of objects. The performance of the single-view method STTP has large displacement error due to the noisy measurements in a single view.
Our method outperforms all these methods, which can model the complex spatiotemporal relationship among objects and integrate multi-view measurements to mitigate the state estimation uncertainty and to improve the performance in the simulations.

The quantitative results in CAD are presented in Table \ref{tab:QuanResults}. We can observe that our baseline method estimating states given multi-view fusion results outperforms the single-view method STTP. The results indicate the importance of integrating learning-based state estimation and model-based multi-view fusion for collaborative object localization.
AOM performs badly due to the noisy measurements. MMT performs better than AOM given the model-based state estimation. The learning-based method STTP can significantly improve the performance due to its capability of nonlinear modeling. Our approach achieves the best performance by modeling complex relationship of objects and integrating multiple measurements.

\begin{figure}[ht]
\centering
\subfigure[MMT]{
        \centering
        \includegraphics[width=0.16\textwidth]{./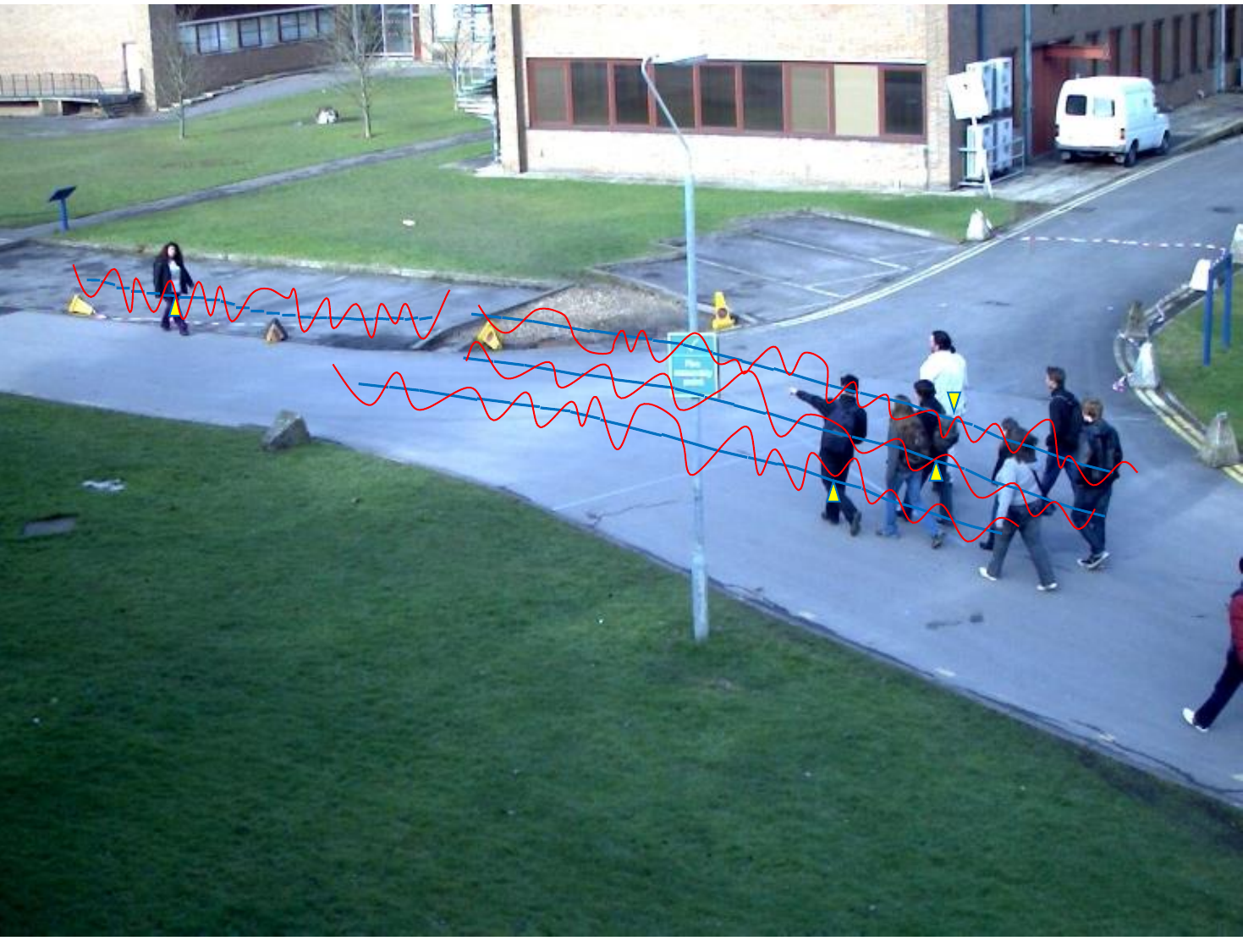}\label{fig:mpl-mmt1}
        \hspace{-4pt}
        \includegraphics[width=0.16\textwidth]{./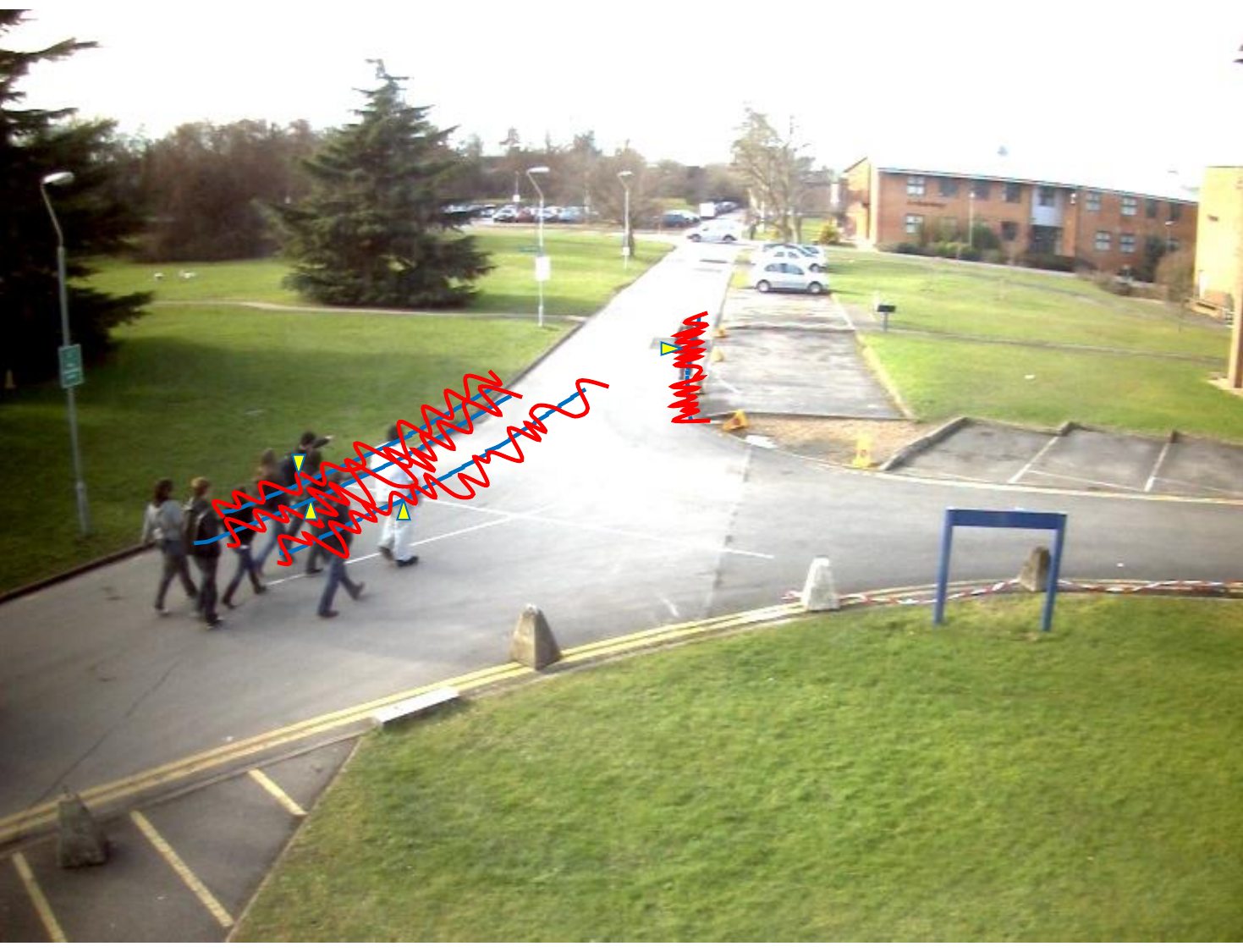}\label{fig:mpl-mmt2}
        \hspace{-4pt}
        \includegraphics[width=0.16\textwidth]{./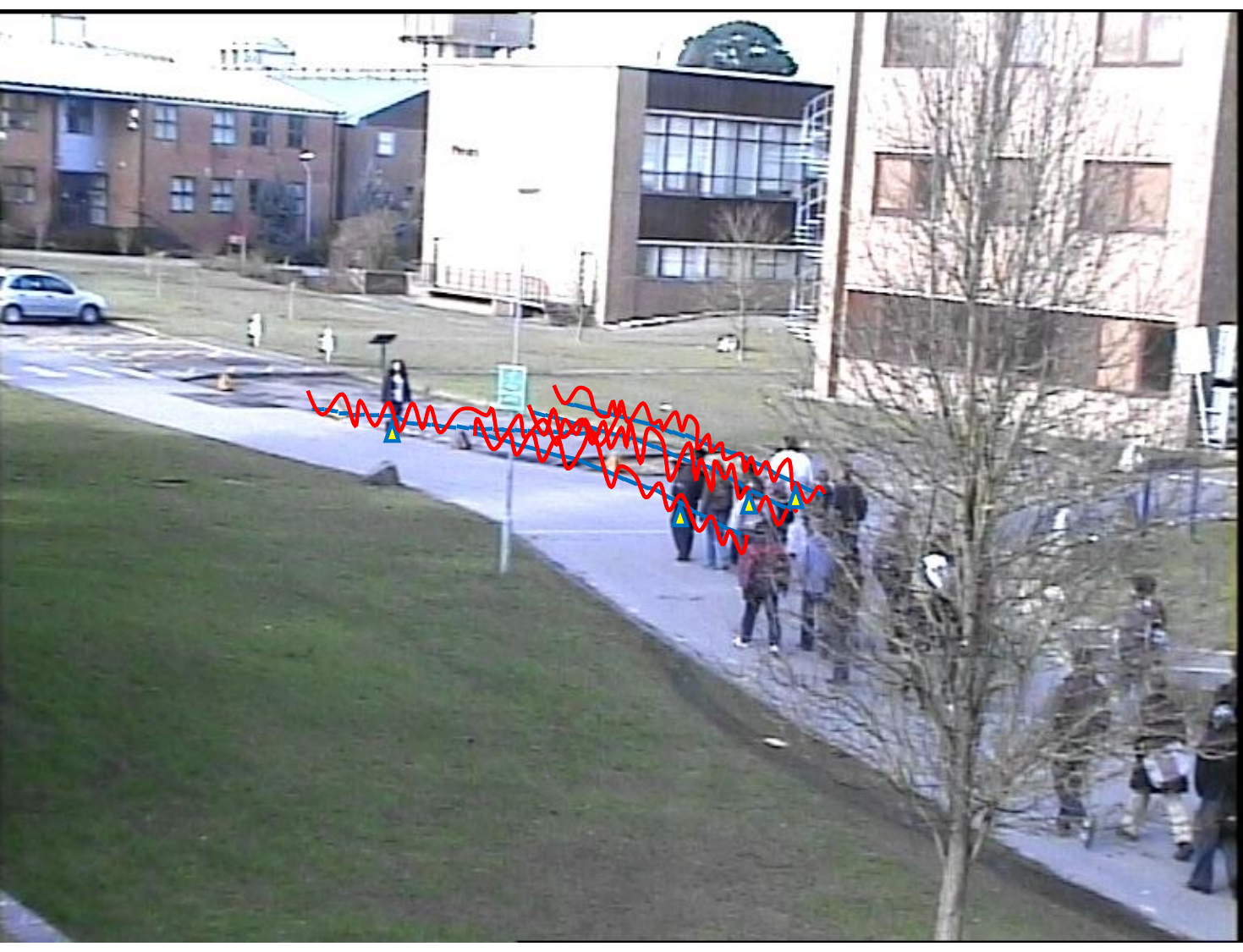}\label{fig:mpl-mmt3}
}\\
\subfigure[STTP]{
    \vspace{-30pt}
    \includegraphics[width=0.16\textwidth]{./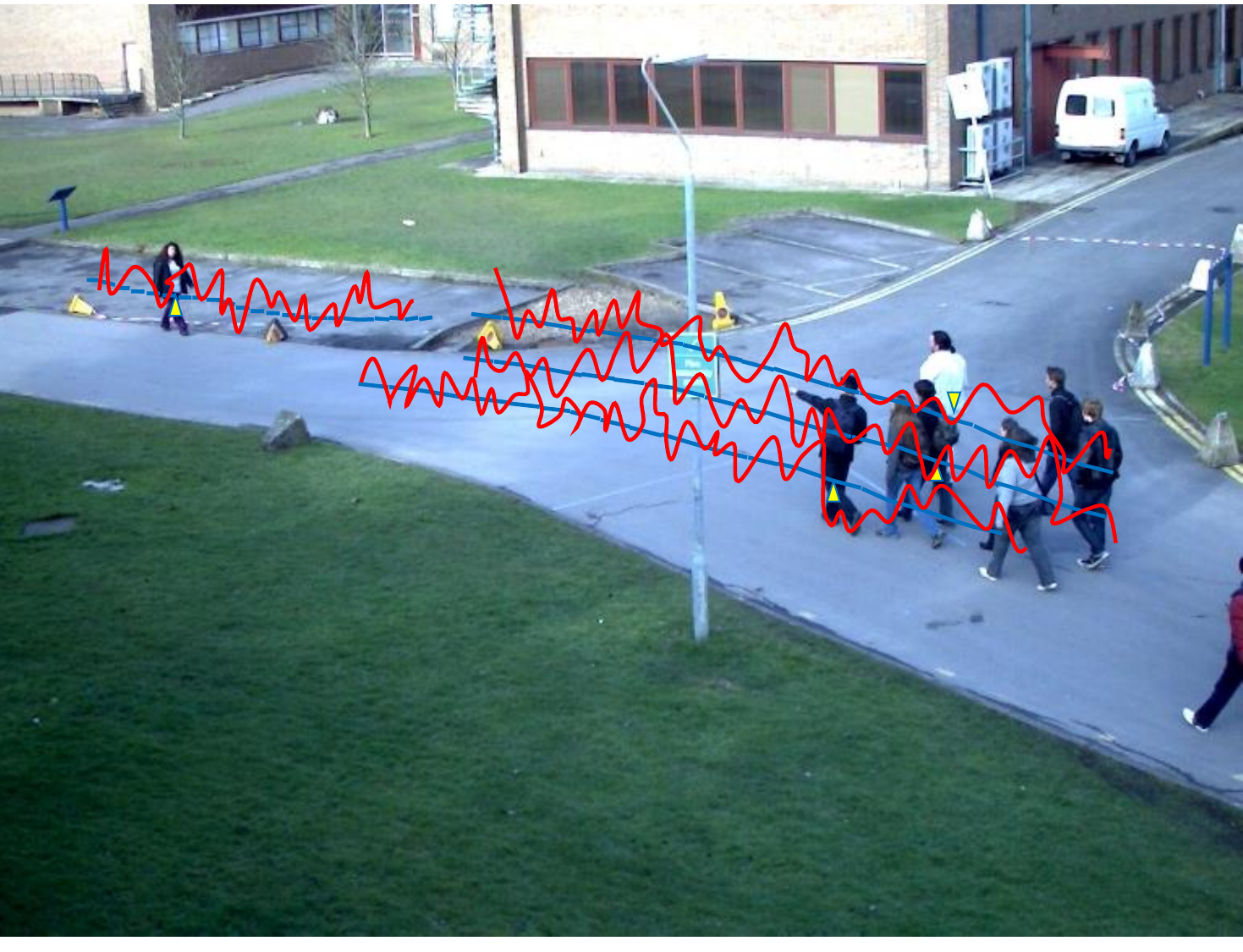}\label{fig:mpl-sttp1}
    \hspace{-4pt}
    \includegraphics[width=0.16\textwidth]{./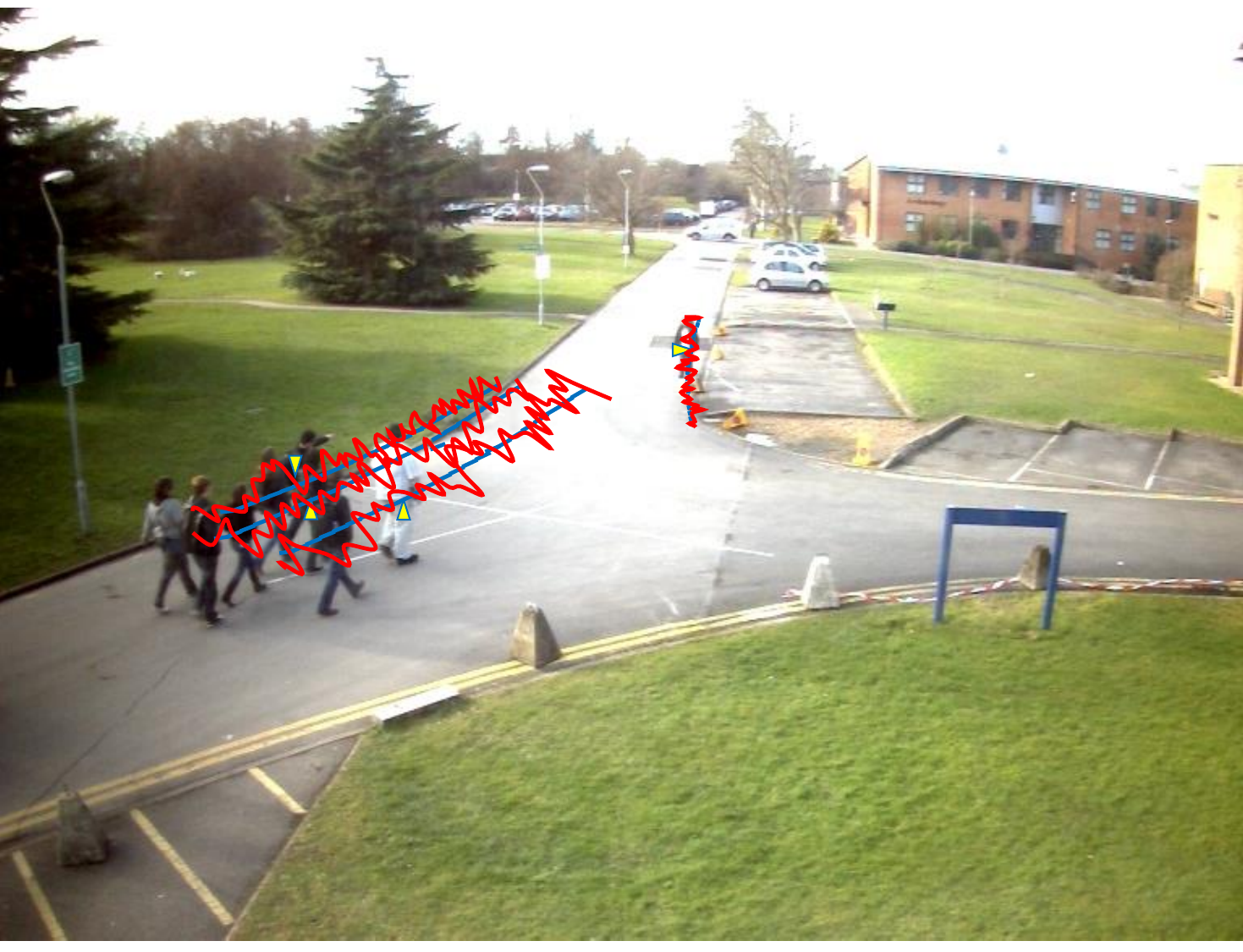}\label{fig:mpl-sttp2}
    \hspace{-4pt}
    \includegraphics[width=0.16\textwidth]{./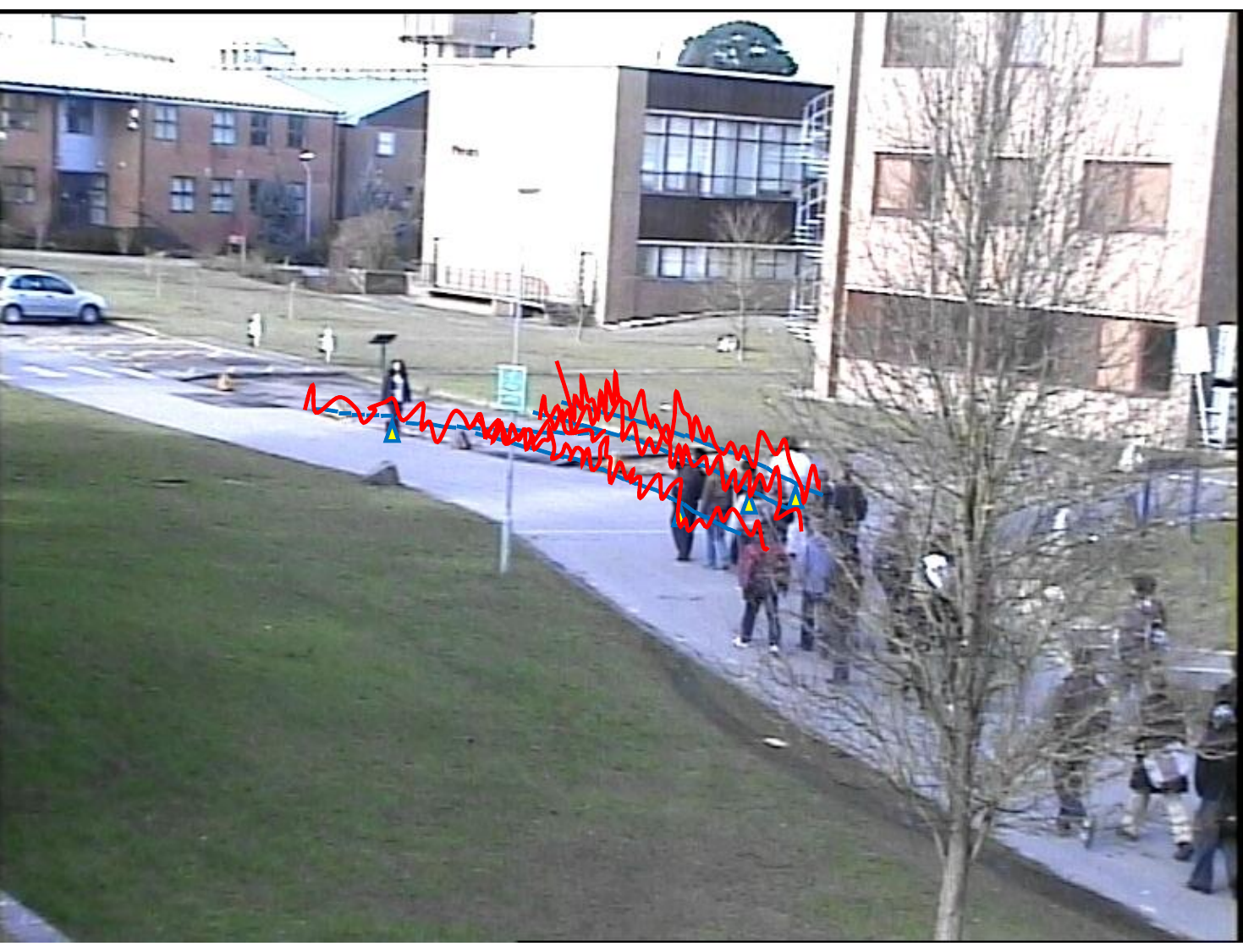}\label{fig:mpl-sttp3}
}\\
\subfigure[Ours]{
    \vspace{-30pt}
    \includegraphics[width=0.16\textwidth]{./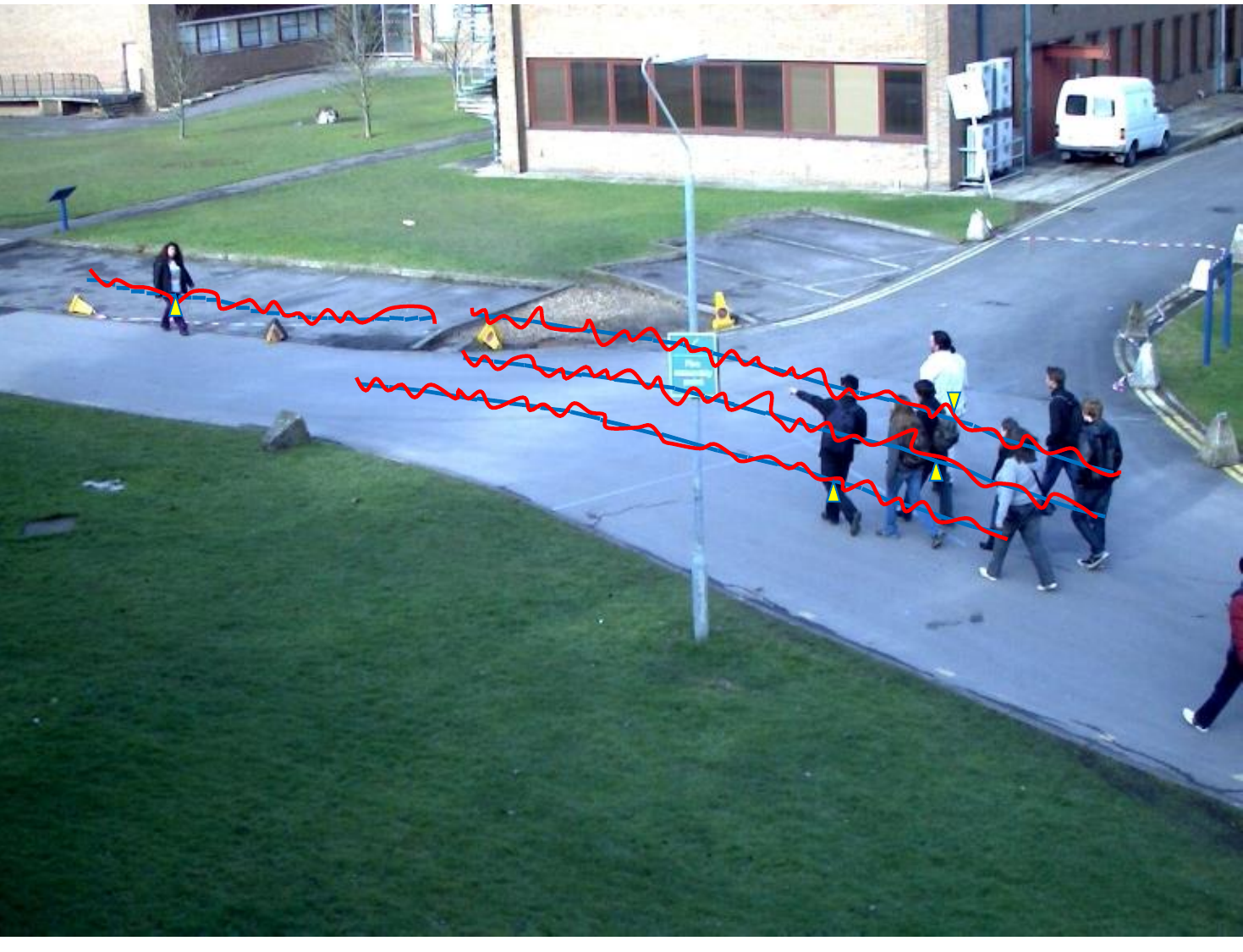}\label{fig:mpl-outs1}
    \hspace{-4pt}
    \includegraphics[width=0.16\textwidth]{./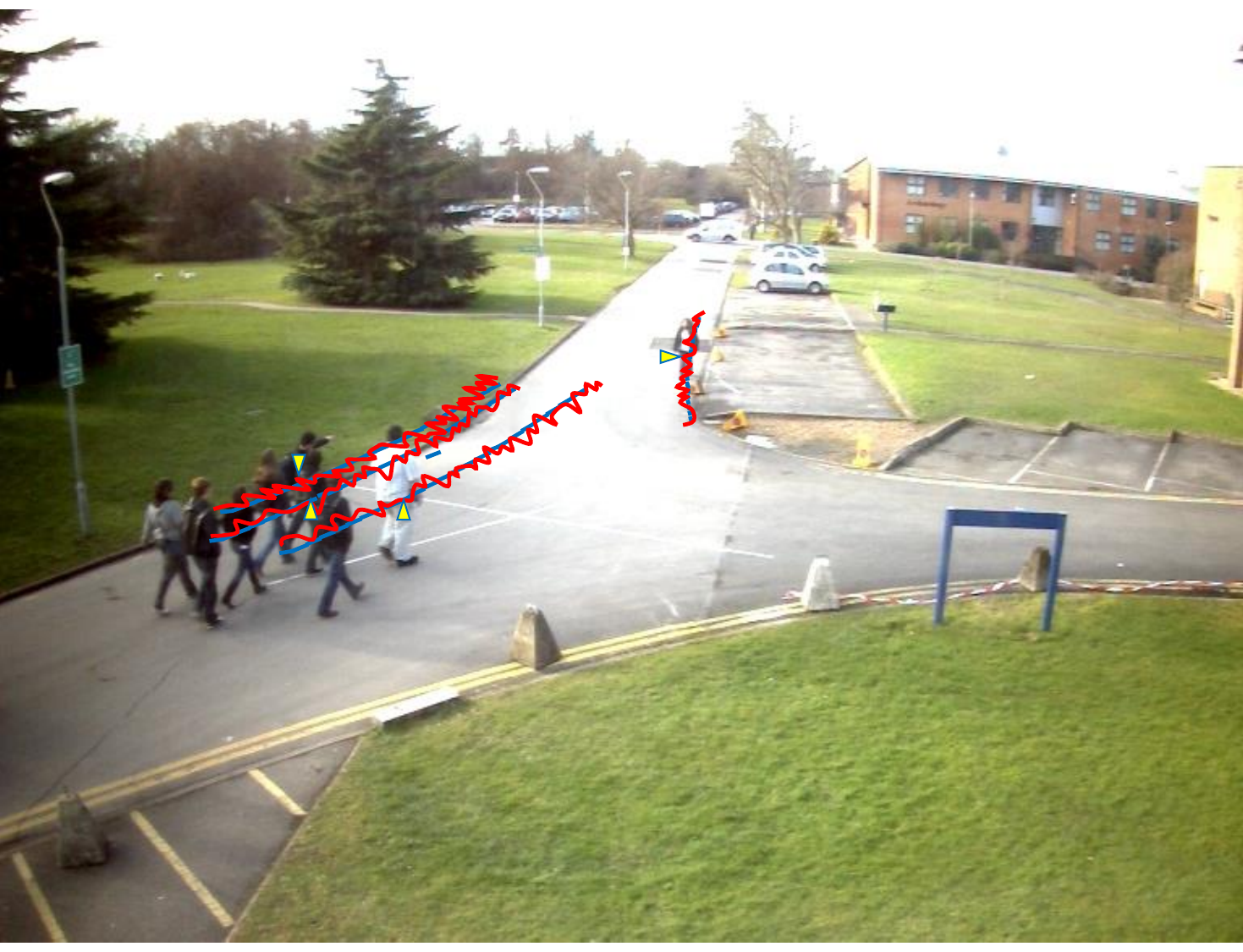}\label{fig:mpl-ours2}
    \hspace{-4pt}
    \includegraphics[width=0.16\textwidth]{./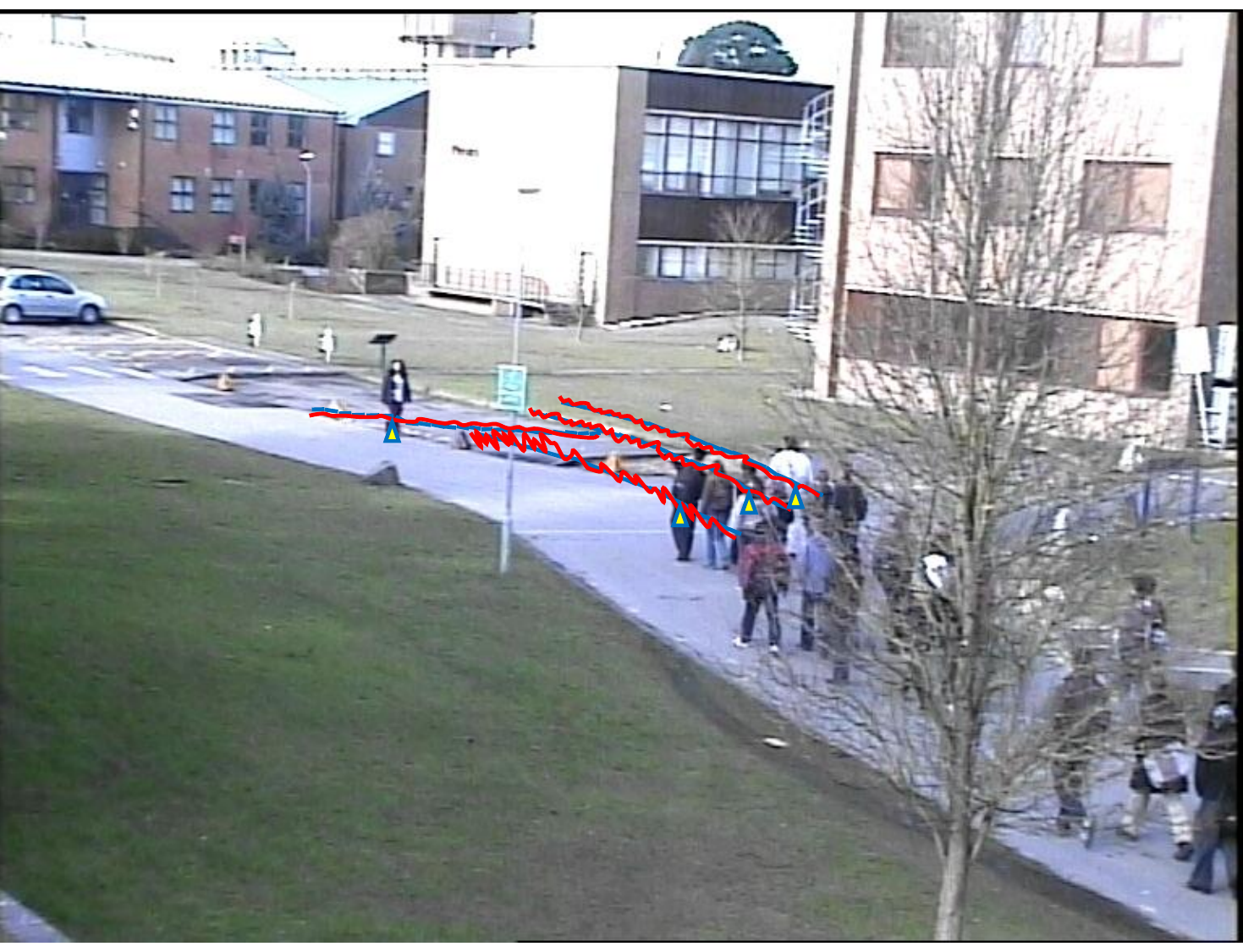}\label{fig:mpl-ours3}
}
\caption{Qualitative experimental results of our approach over MPL in view 1-3 and comparisons with the MMT and STTP methods. Blue lines denote ground truth trajectories and red lines denote the estimations. 
}\label{fig:QualResults-mpl}
\vspace{-6pt}
\end{figure}

\subsection{Real-world Multiple Pedestrian Localization}
We perform additional evaluation in the MPL scenario.
In MPL, the object instances are dynamic pedestrians, which have complex spatiotemporal relationships, e.g., changing moving direction to avoid collision. In addition, the measured locations of pedestrians include large noise.

The qualitative results in MPL are demonstrated in Figure \ref{fig:QualResults-mpl}.
The results indicate that MMT and STTP can localize objects but the performance is inaccurate and have large estimation uncertainty due to the noisy measurements.
By addressing both measurement uncertainties and the complex spatiotemporal relationship of objects in state estimations, our approach obtains the best results in  collaborative object localization.

The quantitative results in MPL are listed in Table \ref{tab:QuanResults}.
MMT and STTP obtain an improved result compared with AOM by partially addressing the challenges, e.g., integrating multi-view measurements and modeling spatiotemporal relationships of objects.
By addressing both of the challenges, our full approach obtains the best performance in the experiments of multiple pedestrians localization in the real world.


\begin{wrapfigure}{RI}{0.24\textwidth}
\centering
\vspace{-12pt}
\includegraphics[width=0.235\textwidth]{./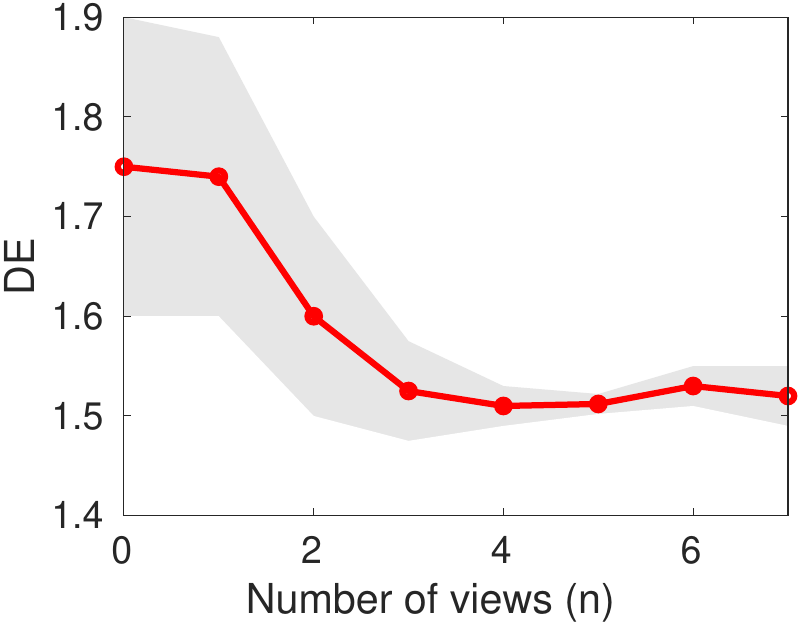}
\centering
\vspace{-12pt}
\caption{
Hyper-parameter analysis. }
\label{fig:param}
\vspace{-10pt}
\end{wrapfigure}

We also studied the influence of the number of collaborative views $n$ on MPL. The results from our approach that localizes objects given the different number of measurements provided by multi-view systems are shown in Figure \ref{fig:param}. We can see that the performance of our approach gradually improves (the decrease of DE and uncertainty visualized by the thickness of the dark area) as the increase of the number of  collaborative views. When $n<4$, our approach improves rapidly as the increase of collaborative views. When $n \in [4,6]$, our approach achieves the best performance with DE$ \in [1.5121,1.5632]$ and uncertainty $\in [0.04,0.1]$. When $n>6$, the performance becomes stable with small fluctuations. 
Thus, integrating multi-view measurements can effectively improve the performance of collaborative localization.

\section{Conclusion}


Collaborative object localization is an essential capability for a multi-view system to collaboratively estimate the locations of objects. We propose a novel spatiotemporal graph filtering approach that integrates graph learning and model-based estimation in a principled fashion to perform multi-view sensor fusion for collaborative object localization. Our approach is able to model the complex relationship of objects and to fuse an arbitrary number of measurements to improve the performance. Extensive experiments are conducted to evaluate our approach in the scenarios of high-fidelity connected autonomous driving simulations and real-world multiple pedestrian localization. The experimental results show that our approach outperforms the previous methods and achieves the state-of-the-art performance on collaborative object localization.



\bibliographystyle{IEEEtran}
\bibliography{ref}
\end{document}